\newcommand*{\rom}[1]{\expandafter\@slowromancap\romannumeral #1@}
\newtheorem{theorem}{Theorem}
\newtheorem{assumption}{Assumption}
\newtheorem{properties}{Property}
\newtheorem{definition}{Definition}
\DeclareMathOperator*{\argmax}{\arg\!\max}
\title{\LARGE \bf
Q-learning-based Model-free Safety Filter 
}
\author{Guo Ning Sue$^{*}$, Yogita Choudhary$^{*}$, Richard Desatnik, Carmel Majidi, John Dolan, Guanya Shi
\thanks{*Equal Contribution}
\thanks{All authors are from the Robotics Institute, Carnegie Mellon University \tt\footnotesize{\{gsue, ychoudha, rdesatni, cmajidi, jdolan, guanyas\}@andrew.cmu.edu}}}
\begin{document}

\maketitle
\thispagestyle{empty}
\pagestyle{empty}

\begin{abstract}
Ensuring safety via safety filters in real-world robotics presents significant challenges, particularly when the system dynamics is complex or unavailable. To handle this issue, learning-based safety filters recently gained popularity, which can be classified as model-based and model-free methods. Existing model-based approaches requires various assumptions on system model (e.g., control-affine), which limits their application in complex systems, and existing model-free approaches need substantial modifications to standard RL algorithms and lack versatility. This paper proposes a simple, plugin-and-play, and effective model-free safety filter learning framework. We introduce a novel reward formulation and use Q-learning to learn Q-value functions to safeguard arbitrary task specific nominal policies via filtering out their potentially unsafe actions.
The threshold used in the filtering process is supported by our theoretical analysis.
Due to its model-free nature and simplicity, our framework can be seamlessly integrated with various RL algorithms. We validate the proposed approach through simulations on double integrator and Dubin's car systems and demonstrate its effectiveness in real-world experiments with a soft robotic limb. 
\end{abstract}

\section{INTRODUCTION}

Reinforcement learning has shown tremendous progress in a variety of decision making and control problems, particularly games \cite{DBLP:journals/corr/abs-1903-00374}, locomotion \cite{margolis2022rapidlocomotionreinforcementlearning}, and robot manipulation \cite{DBLP:journals/corr/GuHLL16}. However, deploying RL in real-world robotic operations presents unique challenges, with safety being a paramount concern. Unfortunately, devising effective safety mechanisms is challenging. To tackle the issue of safety, various tools from control theory such as control barrier functions (CBFs) \cite{Ohnishi_2019, emam2022safereinforcementlearningusing, DBLP:journals/corr/abs-1903-08792}, model predictive safety certification \cite{wabersich2021probabilisticmodelpredictivesafety}, Hamilton-Jacobi reachability (HJR) \cite{fisac},
have been integrated with the standard reinforcement learning pipeline.
These methods require accurate models of robotic systems, which are often unavailable. To address this issue, recent research has shifted towards model-free approaches \cite{he2024agilesafelearningcollisionfree}. However, these approaches \cite{achiam2017constrainedpolicyoptimization}, \cite{tanyour} are either not robust, requiring near-perfect safety value function approximations \cite{castellano2024learningsafetycriticsnoncontractive}, or drastic changes to the standard model-free RL pipeline \cite{fisac2019bridging}. In this paper, we introduce a model-free safety filter that has theoretical safety guarantees in the optimal case, but is still robust to suboptimal conditions and fits seamlessly into the standard model-free RL pipeline.\\
\begin{figure}[!t]
    \centering
    \includegraphics[width=\columnwidth]{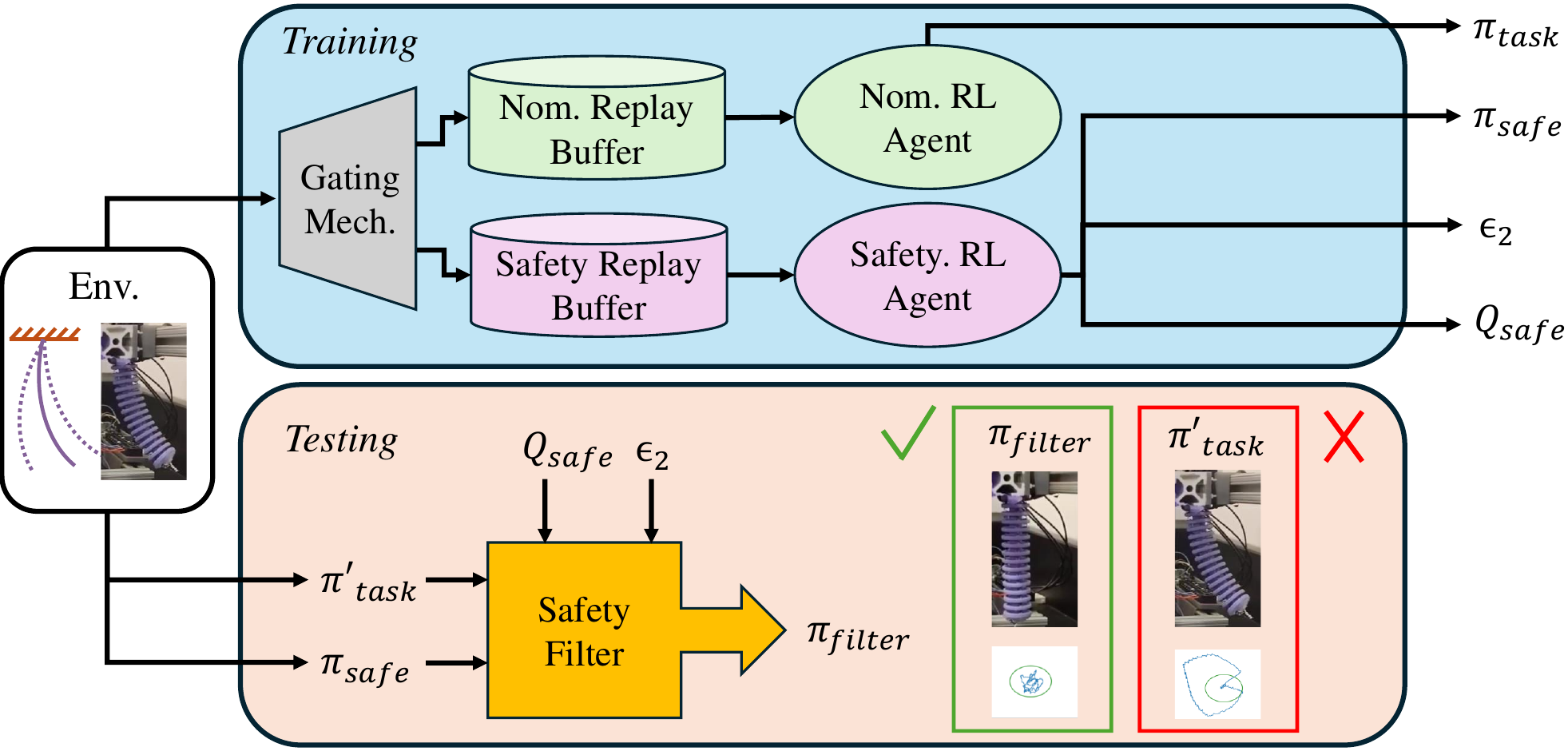} 
    \caption{The block diagram shows our model-free RL-based safety filter framework. During training, environment observations are stored in the replay buffers of both task specific nominal and safety agents, enabling their simultaneous training. In testing, observations are processed by both policies, and the nominal action is filtered based on the safety agent's Q-function and threshold $\epsilon_2$.
    }
    \label{fig:bloc}
\end{figure}

\subsection{Related Work}
\textbf{Model-free Safe Reinforcement Learning}: Model-free Safe Reinforcement Learning (RL) is frequently formulated as a Constrained Markov Decision Process (CMDP) \cite{altman-constrainedMDP}. A CMDP extends a standard MDP by incorporating an additional cost signal to identify state-action pairs that violate constraints. By defining cost thresholds, it is possible to derive policies with low failure probabilities. Popular methods for solving CMDPs include Lagrange multiplier methods  \cite{adaptivedp, tessler2018rcpo, Achiam2019BenchmarkingSE}, projection methods \cite{projectionbasedcpo}, and penalty function methods \cite{itegrateddecisionandcontrol}. 
Another line of work in model-free RL focuses on learning a safety critic to filter out unsafe actions and is also the inspiration behind our proposed approach. Safety Q-functions for Reinforcement Learning (SQRL) \cite{safetycritic_learningtobesafe} learns a safety critic that predict the future failure probability and uses the critic to constrain the nominal policy. The framework involves pre-training the safety critic and then fine-tuning the policy on target tasks using the learned safety precautions. 

Expanding on this approach, Recovery RL \cite{thananjeyan2021recovery} additionally learns a recovery policy along with safety critic. They utilize two separate policies for task and recovery to learn safely without compromising task performance. 

In contrast to \cite{thananjeyan2021recovery}, \cite{honari2024safetyoptimizedreinforcementlearning} focuses on jointly optimizing performance and safety. They modify the reward design proposed in \cite{thomas2022safereinforcementlearningimagining} by incorporating the safety critic into the reward to prevent exploration in unsafe regions during training. \cite{binarysafetycritic} introduces a binary safety critic by leveraging the idea that safety is a binary property. Our formulation also follows the safety critic structure but introduces a novel reward design to ensure safety.

\textbf{Model-based Safe Reinforcement Learning}: Model-based safe RL methods are more sample efficient compared to model-free methods. 
Recent studies integrate CMDPs with model-based RL to minimize training violations and speed up learning \cite{thomas2022safereinforcementlearningimagining, modelbased_cpoworldmodels, modelbased_cmbpo}. Safe Model-Based Policy Optimization (SMBPO) \cite{thomas2022safereinforcementlearningimagining} utilizes the learned model for planning to prevent safety violations by penalizing unsafe trajectories. Their approach, however assumes safety violations occur within a given horizon after entering irrecoverable states. Moreover, uncertainties in the learned model can lead to incorrect predictions, causing the agent to misclassify unsafe states as safe. In our proposed approach, we do not make any assumption on the horizon of irrecoverable states and utilize a model-free approach for ensuring safety.

\subsection{Contributions}
In this paper, we introduce a methodology to create a model-free safety filter that offers theoretical safety guarantees under optimal conditions, remains robust under suboptimal conditions, and seamlessly integrates with standard model-free reinforcement learning (RL) paradigms. An overview of our methodology is shown in Fig.~\ref{fig:bloc}. Our contributions are as follows: 
\begin{itemize}
    \item We propose a novel reward formulation that ensures safety throughout the entire episode length without assuming that all irrecoverable states lead to unsafe regions within a given time horizon, unlike previous approaches \cite{thomas2022safereinforcementlearningimagining, honari2024safetyoptimizedreinforcementlearning}.
    \item Our method integrates smoothly with existing RL paradigms, allowing simultaneous training of the task policy and safety filter using separate replay buffers. The safety filter generalizes to any task policy due to the decoupling of task and safety policies.
    \item We validate our approach on two classical control problems in simulation—the Dubin's car and double integrator systems—and perform real-world testing on a  soft robot. Our approach is robust against training inaccuracies and generalizes to systems with complex dynamics, such as soft robotic systems.
\end{itemize}


The paper is organized as follows. In Section~\ref{sec:prelim}, we introduce the definitions and terminologies commonly used in safe RL community. Section~\ref{sec:methodology} presents the novel reward formulation, detailed theoretical analysis, and implementation of the proposed approach. In Section~\ref{sec:Results}, we validate the approach through simulations on the double integrator system and Dubin's car, followed by hardware experiments on a soft robot. Finally, Section~\ref{sec:Conclusion} concludes the paper.

\section{Preliminaries}
\label{sec:prelim}

\subsection{State Space Partitioning}
Consider a car traveling along a road (Fig. \ref{fig:small_car}) where areas like the curb are designated as unsafe states (\( \mathcal{X}_{\text{unsafe}} \)) due to hazards. If the car is traveling at \( 80\,\text{km/h} \) and is only \( 1\,\text{m} \) from the curb,  a collision is inevitable even with maximum deceleration, defining this state as the irrecoverable state (\( \mathcal{X}_{\text{irrec}} \)). In contrast, if the car is \( 1\,\text{km} \) away from the curb, there are various control actions that can ensure safety, categorizing this state as an absolutely safe state (\( \mathcal{X}_{\text{safe}} \)). Therefore, we can divide the state space $\mathcal{X}$ into three subspaces: $\mathcal{X}_{\text{safe}}$, $\mathcal{X}_{\text{irrec}}$ and $\mathcal{X}_{\text{unsafe}}$.

\begin{definition}
$\mathcal{X}_{\text{safe}}$: The set of states for which there is always an action to keep out of the unsafe region and stay in the same set. For example a speeding car far from any obstacles.
\end{definition}
Mathematically, if $x_t \in \mathcal{X}_{safe}$ then $\exists u \in \mathcal{U}$ where $x_{t+1} = f(x_t, u)$ and $x_{t+1} \in \mathcal{X}_{safe}$. $f$ denotes system dynamics, $x_t$ denotes a state in the system at time $t$, $u$ stands for an action to take at $x$ and $\mathcal{U}$ denotes the set of possible actions to take. This implies that $\mathcal{X}_{\text{safe}}$ is \textit{forward invariant}.

\begin{definition}
$\mathcal{X}_{\text{irrec}}$: The set of states of which is currently not in $\mathcal{X}_{unsafe}$, but will inevitably reach the unsafe region despite all actions taken. For example, a speeding car very close to an obstacle. Due to its speed, despite applying a maximum breaking deceleration, there is nothing to prevent its collision.
\end{definition}
Mathematically, if $x_t \in \mathcal{X}_{irrec}$ then $x_{t + T} \in \mathcal{X}_{unsafe}$ for any $u_t, u_{t+1}....,u_{t+T} \in \mathcal{U}$ where $T$ is a positive integer.

\begin{definition}
$\mathcal{X}_{\text{unsafe}}$: The set of states that is human defined to be unsafe either to other humans or the system.
\end{definition}

Furthermore, these subspaces have the following relations:
\begin{align}
    \mathcal{X} &= \mathcal{X}_{\text{safe}} \cup \mathcal{X}_{\text{irrec}} \cup \mathcal{X}_{\text{unsafe}} \label{eq:comp_of_union} \\
    \mathcal{X}_{\text{unsafe}} \cap \mathcal{X}_{\text{safe}} &= \mathcal{X}_{\text{unsafe}} \cap \mathcal{X}_{\text{irrec}} = \mathcal{X}_{\text{safe}} \cap \mathcal{X}_{\text{irrec}} = \emptyset \label{eq:comp_of_X_intersection}
\end{align}

Identifying irrecoverable states is challenging due to system dynamics, but crucial as entering \( \mathcal{X}_{\text{irrec}} \) leads to \( \mathcal{X}_{\text{unsafe}} \). 
 \begin{figure}[!t]
    \centering
    \includegraphics[width=\columnwidth]{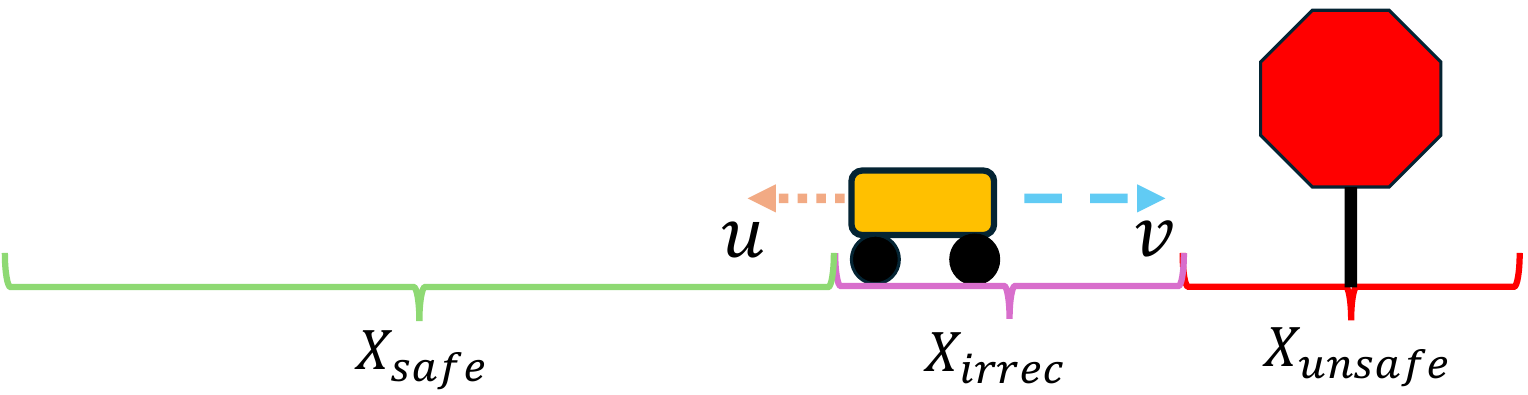} 
    \caption{The full state space is broken down into three regions, $\mathcal{X}_{safe}$, $\mathcal{X}_{irrec}$, $\mathcal{X}_{unsafe}$. $\mathcal{X}_{safe}$ is the region the control input $u$ can always be applied to prevent the system from entering $\mathcal{X}_{unsafe}$. $\mathcal{X}_{irrec}$ is the region where no control input can prevent entry into $\mathcal{X}_{unsafe}$. If a car is moving too fast and too close to the unsafe region, it is an example of the system being in the irrecoverable region.
    }
    \label{fig:small_car}
\end{figure}

\subsection{Q Learning}
We propose to use Q-learning for constructing a safety filter that filters hazardous actions and keeps the system in $\mathcal{X}_{safe}$. In Q-learning, the goal is to learn the expected discounted cumulative reward for a state-action pair in a Markov Decision Process (MDP) \cite{Sutton1998}. The Q-value is updated during training using the Bellman update rule, under deterministic dynamics:
\begin{equation}
    Q(x, u) = r(x, u, x') + \gamma \max_{u'} Q(x', u')
    \label{eq:bellman}
\end{equation}
where \(r\) is the reward function and \(\gamma\) is the discount factor. The value function is defined as:
\begin{equation}
    V(x) = \max_{u} Q(x, u)
\end{equation}
We propose learning optimal Q-function and value function \(Q^*_{safe}\) and \(V^*_{safe}\), respectively using a specialized reward function \(r_{safe}\), where states in \(\mathcal{X}_{safe}\) exceed a threshold \(\epsilon_2\). This threshold helps in determining whether actions from task specific nominal policies \(\pi_{task}\) are unsafe based on the learned Q and V functions, $\hat Q_{safe}$ and $\hat V_{safe}$ and replace with safer actions which provide the highest expected discounted cumulative safe reward.

\section{Methodology}
\label{sec:methodology}

In this work, we propose a Q-learning-based, model-free reinforcement learning approach to construct a safety filter. A block diagram of the framework is shown in Fig.~\ref{fig:bloc}. Our method leverages the fact that \(\hat Q_{safe}\) and \(\hat V_{safe}\) can be learned off-policy, enabling the simultaneous training of a task-specific policy \(\pi_{task}\) and the safety policy \(\pi_{safe}\). 

The two policies are trained in parallel but remain decoupled by a gating mechanism that sorts observations into separate replay buffers based on episodic conditions. This decoupling allows \(\pi_{task}\) to be swapped with any \(\pi'_{task}\) during testing. 
Our theoretical analysis focuses on the formulation of \(r_{safe}\) and its impact on \(\hat Q_{safe}\), \(\hat V_{safe}\), and \(\pi_{safe}\), as \(\pi_{task}\) can be any task policy.

\subsection{Formulation and Analysis}
\subsubsection{Reward Formulation}
We define the safety reward function \( r_{\text{safe}}(x, u, x') \) as:
\begin{equation}
\label{eq:rewards}
    r_{\text{safe}}(x, u, x') = \begin{cases}
        l(x), & x, x' \notin \mathcal{X}_{\text{unsafe}} \\
        -\dfrac{1}{\gamma^t(1 - \gamma)}, & x \notin \mathcal{X}_{\text{unsafe}},\; x' \in \mathcal{X}_{\text{unsafe}} \\
        -1, & x, x' \in \mathcal{X}_{\text{unsafe}}
    \end{cases}
\end{equation}
where \( x \) is the state comprising of time \( t \), position and velocity. \( x' \) is the state the system transitions to after applying control \( u \), and \( \gamma \in (0, 1) \) is the discount factor. The function \( l(x) \in (0, 1] \) increases as the system moves deeper into the safe region; for example:

\begin{equation}
    l(x) = \frac{d(x)}{\max_{x \notin \mathcal{X}_{\text{unsafe}}} d(x)},
\end{equation}
where \( d(x) \) is the positive signed distance to \( \mathcal{X}_{\text{unsafe}} \). For $x$ where $d(x) = 0$, i.e. on the boundary of $\mathcal{X}_{unsafe}$, we consider $x \in \mathcal{X}_{unsafe}$

During training, we optimize the $\hat Q_{\text{safe}}$ function using standard Bellman updates using \( r_{\text{safe}} \).

\begin{assumption}
\label{assump:early_term}
    An episode terminates at the time step when the agent reaches the unsafe region or at maximum episode length, $T$.     
\end{assumption}
We consider a finite episode length scenario as from Eq.~\ref{eq:rewards}, when \( x \in \mathcal{X}_{\text{safe}} \) and \( x' \in \mathcal{X}_{\text{unsafe}} \), the reward can become unbounded if \( t \) is very large. Therefore, we define \( T \) as the episode length during training to bound the reward and prevent divergence. 

Our design of \( r_{\text{safe}} \) aims to ensure a clear separation on $\hat V_{safe}$ between states in \( \mathcal{X}_{\text{safe}} \), and those in \( \mathcal{X}_{\text{irrec}} \). While an intuitive approach is to add a large negative penalty upon entering the unsafe region \cite{thomas2022safereinforcementlearningimagining}, fixed penalties diminish over time due to discounting, blurring the value separation between \( \mathcal{X}_{\text{safe}} \) and \( \mathcal{X}_{\text{irrec}} \). 
This reward formulation ensures a clear separation between \( \mathcal{X}_{\text{safe}} \) and \( \mathcal{X}_{\text{irrec}} \) in discounted cumulative rewards shown in the properties described below.

\subsubsection{Analysis}
\label{sec:anaylsis}
The true value function $V^*_{safe}$ has the following properties.

\begin{properties}
\label{prop:safe_states}
    If $x \in \mathcal{X}_{safe}$, then the optimal value of the value function satisfies the following: $0 < V^*_{safe}(x) \leq \frac{1-\gamma^{T+1}}{1 - \gamma}$.
\end{properties}

From the definition of $\mathcal{X}_{\text{safe}}$, $\mathcal{X}_{\text{safe}}$ is forward invariant and thus the agent accumulates the discounted reward $l(x)$ for $T$ steps. Since, $ 0 < l(x) \le 1$, therefore:

\begin{equation*}
    \sum^{T}_{i = 0} \gamma^i\cdot 0 <  V^*_{safe}(x) = \sum^{T}_{i = 0} \gamma^il(x) \le \sum^{T}_{i = 0} \gamma^i\cdot1
\end{equation*}
\begin{equation}
    0 < V^*_{safe}(x) \le \frac{1-\gamma^{T+1}}{1 -\gamma}
\end{equation}

Since $l(x)$ is continuous and increases as the system moves away from $\mathcal{X}_{unsafe}$, so are values of $V^*_{safe}$.

\begin{properties}
\label{prop:irrec_states}
    If $x \in \mathcal{X}_{irrec}$, then the optimal value of the value function satisfy the following $-\frac{1+\gamma}{1 - \gamma} \leq V_{safe}^*(x) < 0$.     
\end{properties}

In the irrecoverable region, episodes terminate in a finite number of steps with a large negative reward upon entering the unsafe region. Therefore, for \( x \in \mathcal{X}_{\text{irrec}} \), the optimal value function \( V^*_{\text{safe}}(x) \) is represented by the following summation:

\begin{equation*}
    V_{irrec}(x) = -\sum^{T}_{i = N+1}{\gamma^i} - \frac{\gamma^N}{\gamma^N(1-\gamma)} + \sum^{N-1}_{i = 0} \gamma^il(x)
\end{equation*}

where $N$ is a positive integer representing the last time step in $\mathcal{X}_{safe}$. Since $ 0 < l(x) \le 1$, substituting and simplifying:
\begin{equation*}
\label{eq:ineq}
    \frac{-1 + \gamma^{T+1} - \gamma}{1 - \gamma} \le V^*_{safe}(x) \le \frac{-{\gamma}^{T}}{1 - \gamma}
\end{equation*}
\begin{equation}
\label{eq:irrec}
    \frac{-1 + \gamma^{T+1} - \gamma}{1 - \gamma} \le V^*_{safe}(x) < 0
\end{equation}
as $\gamma \in (0, 1)$

\begin{properties}
    \label{prop:unsafe_states}
    If $x \in \mathcal{X}_{unsafe}$, then the optimal value of the value function satisfy the following $V^*_{safe}(x) = -\frac{1-\gamma^{T+1}}{1-\gamma}$.
\end{properties}
When $x \in \mathcal{X}_{unsafe}$, the episode terminates immediately with a penalty of $-1$. Therefore, this terminal state accumulates the discounted cumulative reward of $-1$ for $T$ steps:
\begin{equation}
    V^*_{safe}(x) = -\sum^{T}_{i = 0} \gamma^i
    = -\frac{1-\gamma^{T+1}}{1-\gamma}
\end{equation}

These properties show that \( V^*_{\text{safe}}(x) \) generally increases from \( \mathcal{X}_{\text{unsafe}} \) to \( \mathcal{X}_{\text{safe}} \),  especially the transition from \( \mathcal{X}_{\text{irrec}} \) to \( \mathcal{X}_{\text{safe}} \). This indicates that the boundary between \( \mathcal{X}_{\text{irrec}} \) and \( \mathcal{X}_{\text{safe}} \) corresponds to a threshold value distinguishing these two regions in \( V^*_{\text{safe}}(x) \). Fig. \ref{fig:V_func} depicts a conceptual visualization of the \( V^*_{\text{safe}}(x) \). 

\begin{figure}[!t]
    \centering
    \includegraphics[width=\columnwidth]{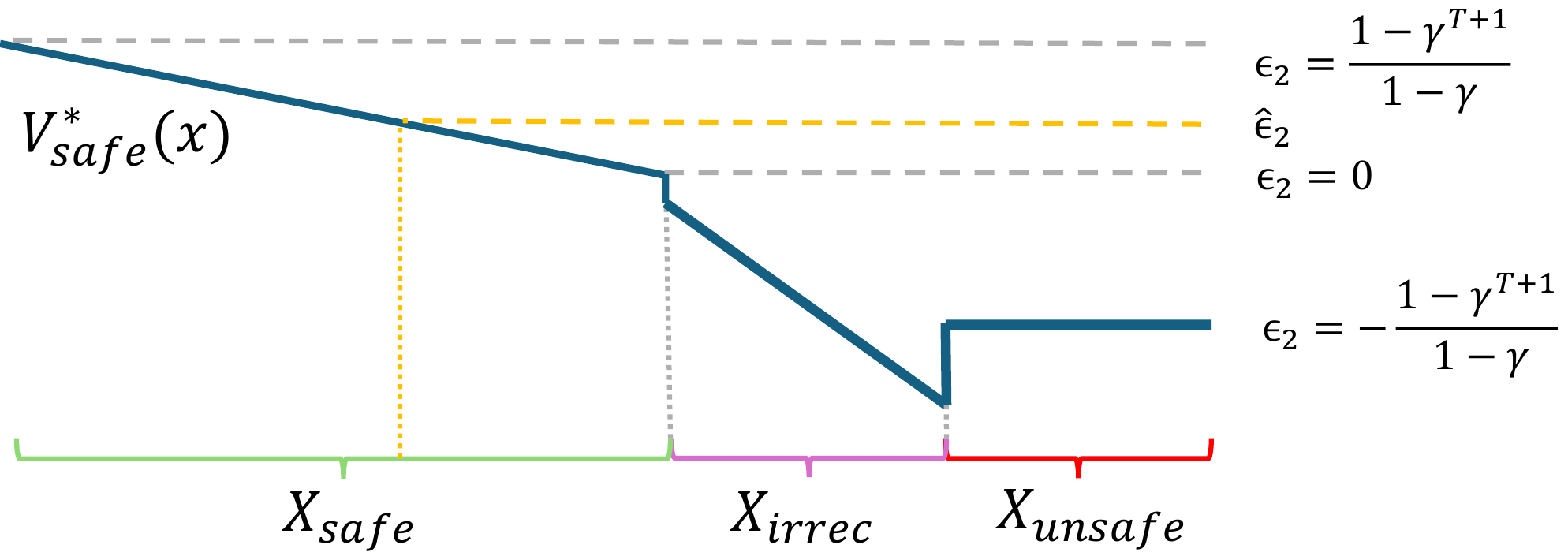} 
    \caption{A 1 Dimensional concept visualization of how the $V^*_{safe}$ could be like. $\epsilon_2=0$ is the threshold value that separates $\mathcal{X}_{\text{safe}}$ and $\mathcal{X}_{\text{irrec}}$. Because $V^*_{safe}$ is increasing deeper inside $\mathcal{X}_{\text{safe}}$, picking a threshold value $\hat \epsilon_2$ that is higher than the optimal $\epsilon$, will gives a more conservative estimate of the safe set.
    }
    \label{fig:V_func}
\end{figure}

From Eq.~\ref{eq:comp_of_X_intersection} and from Prop.~\ref{prop:safe_states}:
\begin{equation}
\label{eq:safe}
    x \in \mathcal{X}_{safe} \iff V_{safe}^*(x) \in [0, \frac{1-\gamma^{T+1}}{1-\gamma}]
\end{equation}
Again from Eq. \ref{eq:comp_of_X_intersection}, if $x \notin \mathcal{X}_{safe}$, it must be in $\mathcal{X}_{irrec}$ or $\mathcal{X}_{unsafe}$:
\begin{equation}
\label{eq:unsafe}
    V^*_{safe}(x) < 0 \implies x \in \mathcal{X}_{irrec} \text{ or } x \in \mathcal{X}_{unsafe}
\end{equation}

Integrating the insights from Eq.~\ref{eq:safe}, Eq.~\ref{eq:unsafe}, and the fact that $V^*_{safe}(x) = \max_{\mathbf{u} \in \mathcal{U}} Q^*_{safe}(x, u)$, where $\mathcal{U}$ denotes the set of permissible control inputs, leads to the following action filtering scheme to ensure safety:
\begin{equation}
\label{eq:filter}
    \pi_{filter} (x) =
    \begin{cases}
      \pi_{task}(x),  & Q_s^*(x, \pi_{task}(x)) > \epsilon_2\\
      \argmax\limits_{u \in \mathcal{U}} Q_s^*(x, u),  & Q_s^*(x, \pi_{task}(x)) \le \epsilon_2
    \end{cases}     
\end{equation}
where $\epsilon_2$ denotes the value threshold that separates $\mathcal{X}_{safe}$ and $\mathcal{X}_{unsafe}$. $Q_s^*$ represents $Q_{safe}^*$, $\pi_{task}$ is any policy trained to achieve a specific task under arbitrary task rewards $r_{task}$, and $\argmax_{\mathbf{u} \in \mathcal{U}} Q_s^*(x, \mathbf{u})$ is the safety policy $\pi_{safe}$. $\pi_{filter}$ refers to the filtered policy that ensures task completion while maintaining safety. For the optimal $V^*_{safe}$, we have $\epsilon_2 = 0$.

\begin{theorem}
\label{thm:forward_invariance}
    Given $Q^*_{safe}$ and if $x \in \mathcal{X}_{safe}$ and $\pi_{filter}(x)$ is followed, then $x' \in \mathcal{X}_{safe}$ where $x'$ is the state of the system after taking $\pi_{filter}(x)$ from $x$.
\end{theorem}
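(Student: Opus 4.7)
The plan is to reduce the invariance claim to a case analysis on where $x'$ can land, using the Bellman equation to show that any action chosen by $\pi_{filter}$ yields a strictly positive Q-value, and then ruling out $x' \in \mathcal{X}_{irrec} \cup \mathcal{X}_{unsafe}$ by contradiction. First I would observe that by Property~\ref{prop:safe_states}, $V^*_{safe}(x) = \max_u Q^*_s(x, u) > 0$. Inspecting the two branches of Eq.~\ref{eq:filter} with $\epsilon_2 = 0$: in the first branch the filter uses $\pi_{task}(x)$ precisely when $Q^*_s(x, \pi_{task}(x)) > 0$; in the second branch it uses $\pi_{safe}(x) = \argmax_u Q^*_s(x, u)$, which itself attains $V^*_{safe}(x) > 0$. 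Hence in either case the executed action $u = \pi_{filter}(x)$ satisfies $Q^*_s(x, u) > 0$.

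Now by the Bellman equation, $Q^*_s(x, u) = r_{safe}(x, u, x') + \gamma V^*_{safe}(x')$. I would then suppose for contradiction that $x' \notin \mathcal{X}_{safe}$; by Eqs.~\ref{eq:comp_of_union}--\ref{eq:comp_of_X_intersection}, $x'$ must lie in $\mathcal{X}_{unsafe}$ or $\mathcal{X}_{irrec}$. The unsafe sub-case is immediate: substituting $r_{safe}(x, u, x') = -\frac{1}{\gamma^{t}(1-\gamma)}$ together with Property~\ref{prop:unsafe_states}'s value $V^*_{safe}(x') = -\frac{1-\gamma^{T+1}}{1-\gamma}$ makes both terms strictly negative, so $Q^*_s(x, u) < 0$, contradicting what was established above.

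The main obstacle is the $x' \in \mathcal{X}_{irrec}$ sub-case, because Property~\ref{prop:irrec_states} alone only gives $V^*_{safe}(x') < 0$, which need not dominate the positive transition reward $l(x) \in (0, 1]$ appearing in the Bellman expansion. To sharpen this, I would expand $V^*_{safe}(x')$ along its optimal continuation from $x'$: since $x' \in \mathcal{X}_{irrec}$, every continuation enters $\mathcal{X}_{unsafe}$ at some relative step $N' \geq 1$, and the transition reward $-\frac{1}{\gamma^{t+N'}(1-\gamma)}$ is weighted by discount $\gamma^{N'-1}$, which telescopes to exactly $-\frac{1}{\gamma^{t+1}(1-\gamma)}$, independent of $N'$. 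Bounding the earlier $l$-rewards by $1$ each and dropping the subsequent nonpositive in-unsafe tail yields $V^*_{safe}(x') \leq \frac{1 - \gamma^{N'-1}}{1-\gamma} - \frac{1}{\gamma^{t+1}(1-\gamma)}$. Substituting into $l(x) + \gamma V^*_{safe}(x')$ and using $l(x) \leq 1$ together with $\gamma^{-t} \geq 1$, the numerator collapses to at most $-\gamma^{N'} < 0$, whence $Q^*_s(x, u) < 0$, again a contradiction. Ruling out both sub-cases leaves $x' \in \mathcal{X}_{safe}$, which is the desired forward invariance.
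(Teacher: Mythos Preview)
Your proof is correct and follows the same contradiction-via-Bellman strategy as the paper: assume $x'\notin\mathcal{X}_{\text{safe}}$, expand $Q^*_{\text{safe}}(x,\pi_{\text{filter}}(x))$ via the Bellman equation, and derive $Q^*_{\text{safe}}(x,\pi_{\text{filter}}(x))<0$, which contradicts either the filter's first-branch condition or Property~\ref{prop:safe_states} in the second branch.

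The substantive difference is that your argument is actually more complete than the paper's. The paper disposes of the case $x'\notin\mathcal{X}_{\text{safe}}$ in one line by asserting that both $r_{\text{safe}}(x,a,x')<0$ and $\gamma V^*_{\text{safe}}(x')<0$, hence $Q^*_{\text{safe}}(x,a)<0$. As you correctly flag, the first of these inequalities is \emph{false} when $x'\in\mathcal{X}_{\text{irrec}}$: in that sub-case $x,x'\notin\mathcal{X}_{\text{unsafe}}$, so by Eq.~\eqref{eq:rewards} the immediate reward is $l(x)>0$, and Property~\ref{prop:irrec_states} alone does not guarantee that $\gamma V^*_{\text{safe}}(x')$ outweighs it. Your fix---unrolling the optimal continuation from $x'$ and using the time-indexed penalty so that the discounted crossing term collapses to $-\tfrac{1}{\gamma^{t+1}(1-\gamma)}$ independently of $N'$, then bounding the preceding $l$-rewards by a geometric sum---yields $l(x)+\gamma V^*_{\text{safe}}(x')\le \tfrac{1-\gamma^{N'}-\gamma^{-t}}{1-\gamma}\le \tfrac{-\gamma^{N'}}{1-\gamma}<0$, which is exactly the missing estimate. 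So the paper's proof and yours share the same skeleton, but yours supplies the quantitative step the paper glosses over.
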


\begin{proof}
    We proceed to prove by contradiction, assuming first that $x' \notin \mathcal{X}_{safe}$. Letting $\pi_{filter}(x) = \mathbf{a}$, from the Bellman Optimality equation of Q-Learning:
    \begin{align*}
        Q^*_{safe}(x, \mathbf{a}) &= r_{safe}(x, a, x') + \gamma \max_{u \in \mathcal{U}}Q^*_{safe}(x', u) \\
        &= r_{safe}(x, a, x') + \gamma V^*_{safe}(x')
    \end{align*}
    Since $x' \notin \mathcal{X}_{safe}$, that implies:
    \begin{equation*}
        r_{safe}(x, a, x') < 0 ;\;  \gamma V^*_{safe}(x') < 0 ;\;        Q^*_{safe}(x, a) < 0
    \end{equation*}
    From conditions outlined for $\pi_{filter}(x)$ in Eq.\ref{eq:filter}:
    \begin{equation*}
        a = \argmax\limits_{u \in \mathcal{U}} Q^*_{safe}(x, u)
    \end{equation*}
    However
    \begin{align*}
        Q^*_{safe}(x, a) &= Q^*_{safe}(x, \argmax\limits_{u \in \mathcal{U}} Q^*_{safe}(x, u)) \\
        &= \max_{\mathbf{u} \in \mathcal{U}}Q^*_{safe}(x, \mathbf{u}) = V^*_{safe}(x) < 0
    \end{align*}
    According to Eq. \ref{eq:unsafe}, $x \in \mathcal{X}_{irrec}$ or $x \in \mathcal{X}_{unsafe}$, which is a contradiction.
\end{proof}

Theorem \ref{thm:forward_invariance} demonstrates that the filtering scheme described in Eq.~\ref{eq:filter} selects actions that keep the system within $\mathcal{X}_{safe}$. Moreover, the scheme intervenes only when the system is about to exit $\mathcal{X}_{safe}$, thereby minimally impacting the performance of the task policy.

\subsection{Implementation}
To implement \( \pi_{\text{filter}} \) and keep the system within \( \mathcal{X}_{\text{safe}} \), the task policy action is replaced by a greedy policy that maximizes the learned \( \hat{Q}_{\text{safe}} \) under \( r_{\text{safe}} \). We use Soft Actor-Critic (SAC) \cite{haarnoja2018softactorcriticoffpolicymaximum} for simulations with continuous action spaces and Deep Q-Learning (DQN) \cite{MnihKSGAWR13} for real-world validation.

\subsubsection{Simultaneous Training of Task Policy and Safety Policy}

Leveraging the off-policy nature of SAC and DQN agents, we simultaneously train a task agent that maximizes task reward and a safety agent that maximizes the proposed safety reward. Both agents share the environment and rolled-out data but maintain independent replay buffers. A gating mechanism prevents the safety agent from collecting observations once the system enters the unsafe region. Initially, observations are added to both agents' replay buffers with their respective rewards. When the safety agent reaches the unsafe region, it stops receiving new observations, while the nominal agent continues until the episode ends. This approach minimizes interference with the task agent, preserving task performance, and ensures the safety policy remains valid for any task policy despite simultaneous training and shared data.

Due to Assumption \ref{assump:early_term}, the safety agent does not observe the unsafe region, as the episode terminates once the unsafe region is reached. Therefore a supervised loss, similar to the loss used in \cite{tanyour}, is introduced to the safety agent  in addition to the standard RL losses to classify whether a state belongs to $\mathcal{X}_{unsafe}$. The loss is defined as follows:
\begin{equation}
    \mathcal{L}_{unsafe} = \| \hat V_{safe}(x)  + \frac{1-\gamma^{T+1}}{1-\gamma} \|, \; \forall x \in \mathcal{X}_{unsafe}
\end{equation}
where $\hat V_{safe}$ denotes the learned value function.

\subsubsection{Tuning the threshold}

Converging to \( V^*_{\text{safe}} \) is challenging, and while \( \epsilon_2 = 0 \) is optimal in theory, it may not represent the boundary for \( \hat{V}_{\text{safe}} \). Our filter also loses guarantees in suboptimal cases. However, if \( \hat{V}_{\text{safe}} \approx V^*_{\text{safe}} \), a threshold separating \( \mathcal{X}_{\text{safe}} \) and \( \mathcal{X}_{\text{irrec}} \) should exist due to the increasing nature of \( V^*_{\text{safe}} \).

As Fig. \ref{fig:V_func} shows, raising \( \epsilon_2 \) gives more conservative safe region estimates, leading \( \pi_{\text{filter}} \) to select safer actions, reducing the risk of irrecoverable states. Starting from $\epsilon_2=0$, one can tune how conservative the learned filter will act, enabling practical use of suboptimal \( \hat{Q}_{\text{safe}} \) in the safety filter. Section 4 demonstrates that increasing the threshold ensures safety despite discrepancies between the learned and true value and Q functions.

\section{Results}
\label{sec:Results}
We experimentally validate our method in simulations and on real hardware. Simulations involve a double integrator system for basic validation and a Dubin's car environment for static obstacle avoidance and to compare with existing methods. On real hardware we demonstrated the filter's capability to constrain a soft robotic limb within a specified region.

\subsection{Simulation Environments}

\subsubsection{Double Integrator}
The double integrator state is two-dimensional, with position \( x \) and velocity \( \dot{x} \). Safe bounds are \( \pm 2~\text{m} \) and \( \pm 3~\text{m/s} \), with absolute maximum bounds of \( \pm 4~\text{m} \) and \( \pm 5~\text{m/s} \). The input acceleration is bounded between \( \pm 2~\text{m/s}^2 \), and the task is to reach a goal at \( 1.8~\text{m} \) (neon green dashed line in Fig.~\ref{fig:V_func_heatmap}).

The two-dimensional state allows easy visualization of the learned value function to empirically validate our theory. In Fig.~\ref{fig:V_func_heatmap}, safety policy actions (acceleration) are shown on the left, and the value function on the right. The solid black contour represents the analytical safe set  \cite{fisac2019bridging}. The black dashed lines denote the learned safe set where \( \epsilon_2 = 0 \), which lies entirely within the analytical safe set. The blue dashed line represents the contour for \( \epsilon_2 = 90 \); as per our formulation, this contour is smaller than the \( \epsilon_2 = 0 \) contour, leading to more conservative estimates of \( \mathcal{X}_{\text{safe}} \), which is observed empirically.

Moreover, on the left of Fig.~\ref{fig:V_func_heatmap}, at the boundary of the analytical safe set, the actions align with intuition: when approaching the right boundary at high speed, the action is a leftward force (red), and vice versa. These empirical results are consistent with our theoretical expectations.

\begin{figure}[!t]
    \centering
    \includegraphics[width=\columnwidth]{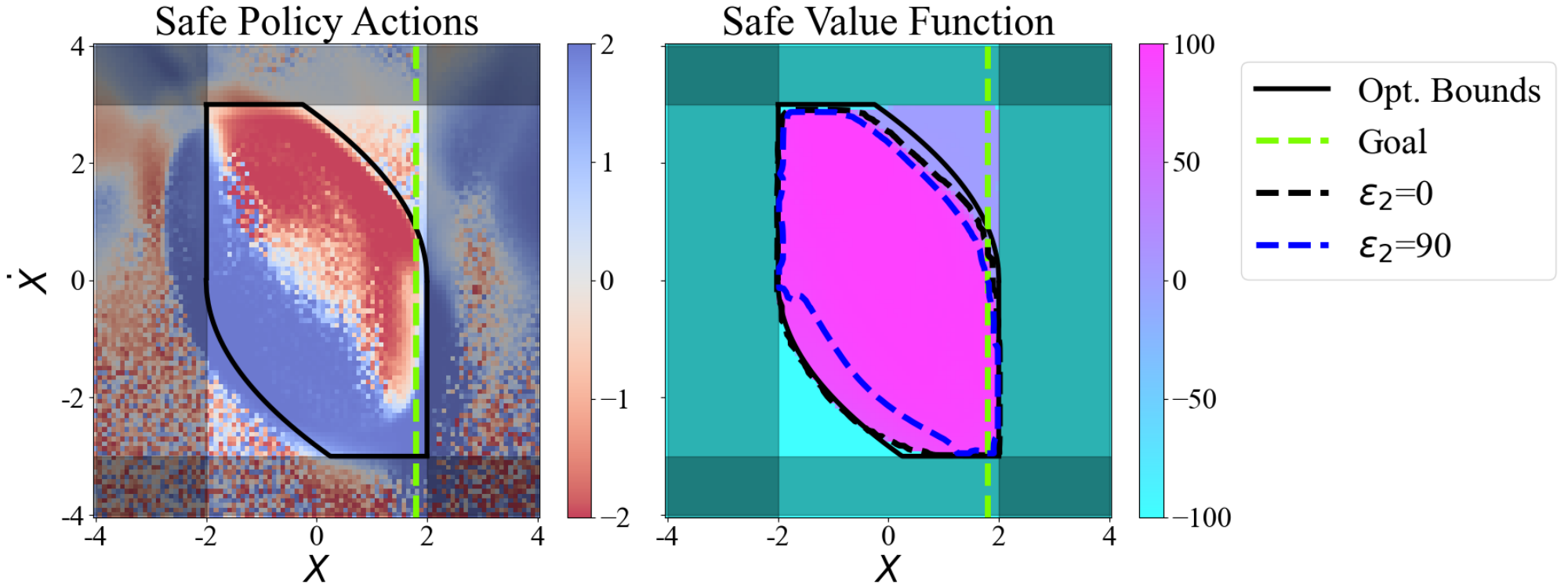} 
    \caption{\textit{Left}: The actions of the safe policy as a function of the state space. \textit{Right}: The value function of the safety agent for double integrator. The dark shaded area is the unsafe region.
    }
    \label{fig:V_func_heatmap}
\end{figure}

\subsubsection{Dubin's car}
Because the double integrator's goal lies within the safe region, the nominal agent remains safe by simply achieving the task, which does not fully demonstrate our safety filter's impact. Therefore, we tested our method in the Dubin's car environment, featuring complex nonlinear dynamics and conflicts between the nominal task and safety, e.g. the shortest path to goal goes through $\mathcal{X}_{unsafe}$.  In this environment, safety bounds are \( \pm 2~\text{m} \) with a central keep-out region of radius \( 1~\text{m} \). The absolute maximum bounds are defined with an input velocity of \( 1.2~\text{m/s} \), aiming for a goal at \( (1.8~\text{m},\, 1.8~\text{m}) \) with a radius of \( 0.5~\text{m} \). As shown in Tab.~\ref{tab:basline_comparisons}, our method's performance is comparable to, and sometimes better than, other common safe RL algorithms.

In Tab.~\ref{tab:basline_comparisons}, we compare our filtered policy (safety filter with co-trained task policy) against Lagrangian Relaxation (LR), Risk Sensitive Policy Optimization (RSPO) \cite{thananjeyan2021recovery}, the unconstrained task policy, and the task policy with a large penalty for entering the unsafe region (Reward Penalty). Results are the mean and standard deviation over 10 runs with different seeds. Using \( \epsilon_2 = 67.38 \), the result of a generic hyper parameter sweep, for our safety threshold, our method attains the highest average episodic return among methods achieving a 100\% safety rate (fraction of episodes reaching the maximum length without safety violations out of 100 episodes),.

We also tested our filter with task policies not co-trained with our safety agent (Tab.~\ref{tab:safety_diff_nom}): PPO \cite{ppo}, DDPG \cite{lillicrap2019continuouscontroldeepreinforcement}, TD3 \cite{fujimoto2018addressingfunctionapproximationerror}, and a uniform random policy. Over 10 runs with different seeds, only PPO had safety violations, with an average safety rate of 98.8\%. We attribute this minor unsafety to the stochastic nature of and our safety filter trained with SAC as it samples from a learned distribution to output an action.

As shown in the left of Fig.~\ref{fig:eval_baselines_eps}, there is a clear trend between \( \epsilon_2 \) and performance. Increasing \( \epsilon_2 \) results in greater safety but reduced nominal rewards, aligning with our theoretical prediction that higher \( \epsilon_2 \) leads to a more conservative method. The blue cross indicates \( \hat{\epsilon}_2 \), the value best value from a hyper parameter sweep. 

\begin{figure}[!t]
    \centering
\includegraphics[width=\columnwidth]{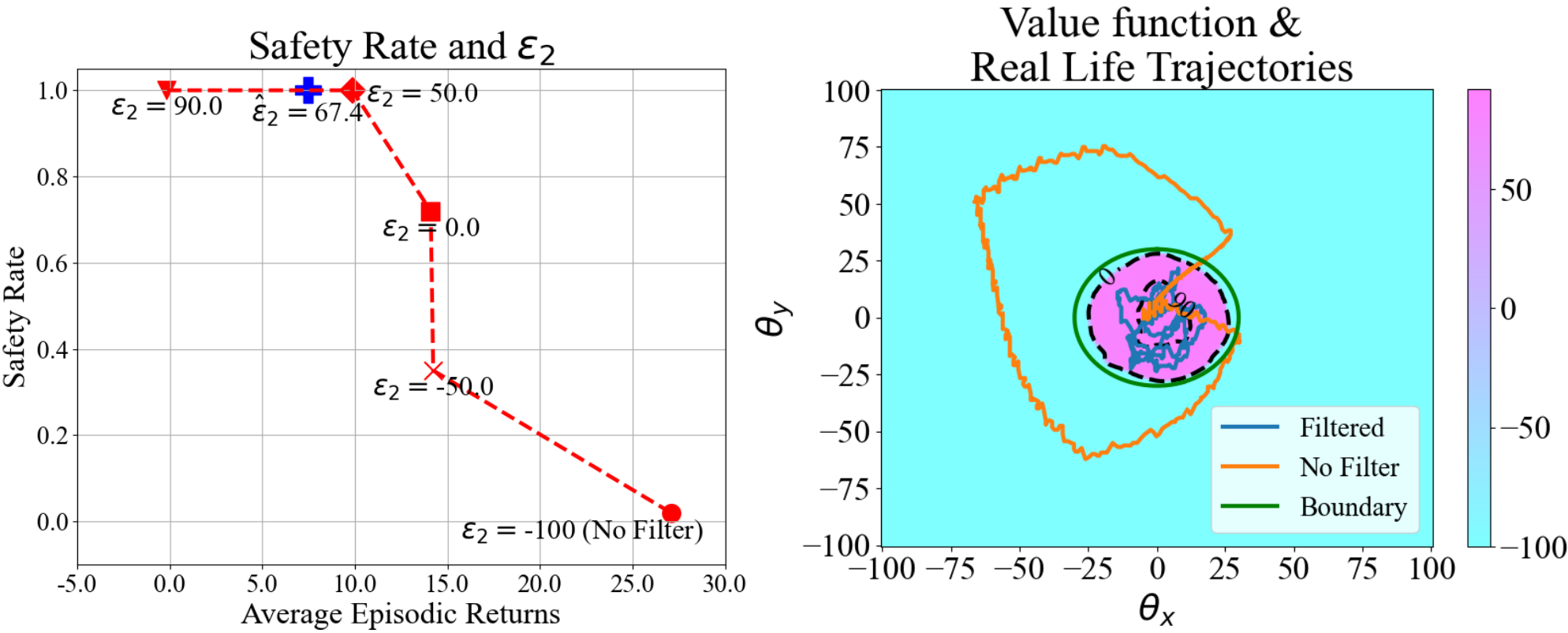} 
    \caption{\textit{Left}: Performance of our filter evaluated by average episodic return and safety rate at different $\epsilon_2$ levels in the Dubin's car environment. \textit{Right}: Real-life limb trajectory overlaid on the learned value function. The value function is a 2D projection (with velocities set to zero) of a 4D function learned through simulation, while trajectory values are recorded in real life. Dark dashed lines denote the 0 and 90 threshold contours.}
    \label{fig:eval_baselines_eps}
\end{figure}

\begin{table}[h!]
\centering
\caption{Performance comparison of our safety filter to other safe model-free reinforcement learning methods.}
\begin{tabular}{|c|c|c|}
\hline
\textbf{} & \textbf{Average Episodic Return} & \textbf{Safety Rate} \\ \hline
\makecell{\textbf{Unconstrained} \\ \textbf{Task Policy}} & 46.38 ± 20.37 & 0.017 ± 0.051 \\ \hline
\textbf{RSPO} & 13.37 ± 4.61 & 0.855 ± 0.138 \\ \hline
\textbf{Reward Penalty} & -2.10 ± 8.49 & 1.000 ± 0.000 \\ \hline
\textbf{LR} & 2.84 ± 2.34 & 1.000 ± 0.000 \\ \hline
\textbf{Filtered Policy} & 7.08 ± 3.37 & 1.000 ± 0.000 \\ \hline
\end{tabular}
\label{tab:basline_comparisons}
\end{table}

\begin{table}[h!]
\centering
\caption{Safety Rate of filtering different policies}
\begin{tabular}{|c|c|c|}
\hline
\textbf{} & \textbf{Safety Rate} & \textbf{Std. Safety Rate} \\ \hline
\textbf{Co-trained Policy} & 1.0 & 0.0 \\ \hline
\textbf{PPO} & 0.988 & 0.023 \\ \hline
\textbf{DDPG} & 1.0 & 0.0 \\ \hline
\textbf{TD3} & 1.0 & 0.0 \\ \hline
\textbf{Random} & 1.0 & 0.0 \\ \hline
\end{tabular}
\label{tab:safety_diff_nom}
\end{table}

\subsection{Hardware Experiments}
\subsubsection{Hardware Setup} 
To experimentally validate the proposed scheme, we implemented it on a soft robotic limb similar to the setup used in \cite{softlimb} and is depicted in Fig.~\ref{fig:hardware_setup}. The soft limb is injection-molded from silicone (Smooth-Sil 945) and actuated by four shape memory alloy (SMA) coils (Flextinol) positioned at the up, down, left, and right orientations. The coils contract when heated by a Pulse Width Modulated (PWM) controlled electrical current that are applied at the coil tips. When the temperature rises above $90^\circ\mathrm{C}$, due to the phase transition of the SMA, the limb bend in a specific direction, increasing the bending angle.

A capacitive bend sensor (Bend Labs Digital Flex Sensor - 2-Axis, 4 Inch) is located in the center to detect the bending angles of the limb. The bending angles $(\theta_x, \theta_y)$ are defined as the angle between the tangent at the limb tip and the vertical, measured along both the $x$ and $y$ axes. 
Since the soft limb's actuation depends on heat—increased heat leads to greater contraction—and due to the malleable nature of silicone, the system exhibits highly nonlinear dynamics. Furthermore, we control only the PWM signals that induce heating to raise the temperature, but relying only on passive cooling to lower it. These factors complicate the dynamics, rendering an analytical mapping from PWM signals to limb states intractable.
\begin{figure}[!t]
    \centering
\includegraphics[width=\columnwidth]{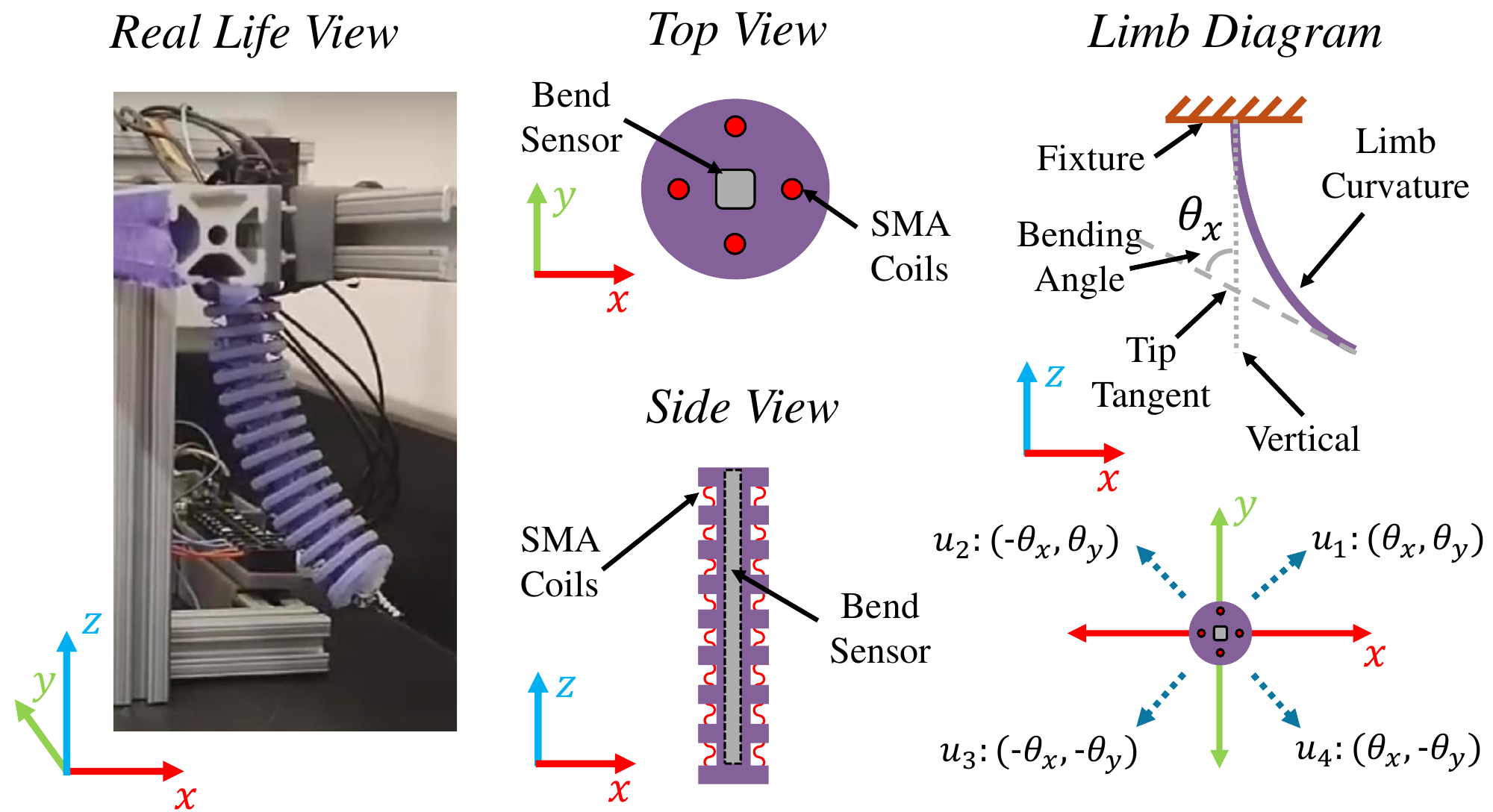} 
    \caption{Experimental setup of the soft robotic limb. \textit{Left}: Soft limb mounted on an aluminum fixture, actuated by SMA coils controlled via PWM through an Arduino UNO. \textit{Middle}: Cross-sectional view (not to scale) showing the embedded two-axis bending angle sensor (gray) in the silicone limb (purple), with SMA coils at cardinal directions. \textit{Right}: Diagram illustrating the bending angle and the four actions used by the trained policy. Video of hardware results available at: https://tinyurl.com/mtfxnc9r}
    \label{fig:hardware_setup}
\end{figure}

\subsubsection{Data Setup}
To enable efficient training, instead of naively deploying RL training on the hardware, we first create a data driven simulator to approximate the system dynamics. Data was collected through random motor wobbling. However, due to the complexity of the dynamics, the best dynamic model we were able to train had a root mean square error (RMSE) of approximately $15^\circ$.

\subsubsection{RL Setup}
To validate our method, we trained the safety policy using DQN with the observation space \( x = [\theta_x, \theta_y, \dot{\theta}_x, \dot{\theta}_y] \) and an action space of four actions—move upper right (\( u_1 \)), upper left (\( u_2 \)), lower left (\( u_3 \)), and lower right (\( u_4 \))—as depicted in Fig.~\ref{fig:hardware_setup}. The safety policy was trained using our approximated dynamics simulator to constrain movements within \( \pm 30^\circ \) on both \( x \) and \( y \) axes. The constraint boundary is shown as a green circle in the right figure of Fig.~\ref{fig:eval_baselines_eps}, i.e., \( \mathcal{X}_{\text{safe}} \) represents states where \( \|\theta\| < 30^\circ \).

For the task policy, we used a predefined action sequence: \( u_1 \) for 3 seconds, \( u_4 \) for 3 seconds, \( u_3 \) for 6 seconds, \( u_2 \) for 12 seconds, \( u_1 \) for 6 seconds, and \( u_3 \) for 3 seconds, aiming to trace a diamond-shaped trajectory. The real-life rollout of this sequence is shown in orange on the right of Fig.~\ref{fig:eval_baselines_eps}. The task of the safety filter is to constraint the limb movement within the \( \|\theta\| < 30^\circ \) boundary.

\subsubsection{Hardware Results}
As shown in the right of Fig.~\ref{fig:eval_baselines_eps}, the safety policy successfully keeps the limb within the safe region, as the blue trajectories, which represent the recorded limb trajectory with our safety filter, are entirely contained within the safe region denoted by the green circle. However, due to the sim-to-real gap between the learned dynamics and the real limb dynamics, $\epsilon_2 = 90$ was required for our scheme to work in real life, resulting in a very conservative filter. Despite this conservatism, the empirical results validate the robustness of our method in maintaining system safety, even in the presence of a significant sim-to-real gap.

\section{Conclusion} 
\label{sec:Conclusion}
We introduce a method to identify safe, unsafe, and irrecoverable states through reward shaping in standard reinforcement learning. We devise a policy filter with theoretical safety guarantees under optimal conditions, which, with threshold tuning, also performs well under suboptimal conditions like large sim-to-real gaps.

Our method allows simultaneous training of a task-specific policy and a safety policy that generalizes to any task policy. It matches existing methods in simulations and is validated on a real-world, highly nonlinear soft silicone limb, making it more versatile, easier to train, and more general than previous methods.

Unfortunately, while the independence of task and safety policy in our framework aids generalization, it may hinder the task performance when safety and task objectives conflict greatly, potentially causing deadlocks. Future research could focus on developing a filter that retains generalizability without significantly impeding the task policy. Additionally, although our method is robust to suboptimalities, it lacks safety guarantees under suboptimal conditions. Extending the formulation to provide safety guarantees even in suboptimal scenarios is another valuable research direction.

\addtolength{\textheight}{-12cm}   


\bibliographystyle{IEEEtran}
\bibliography{root}

\begin{thebibliography}{10}
\providecommand{\url}[1]{#1}
\csname url@samestyle\endcsname
\providecommand{\newblock}{\relax}
\providecommand{\bibinfo}[2]{#2}
\providecommand{\BIBentrySTDinterwordspacing}{\spaceskip=0pt\relax}
\providecommand{\BIBentryALTinterwordstretchfactor}{4}
\providecommand{\BIBentryALTinterwordspacing}{\spaceskip=\fontdimen2\font plus
\BIBentryALTinterwordstretchfactor\fontdimen3\font minus \fontdimen4\font\relax}
\providecommand{\BIBforeignlanguage}[2]{{%
\expandafter\ifx\csname l@#1\endcsname\relax
\typeout{** WARNING: IEEEtran.bst: No hyphenation pattern has been}%
\typeout{** loaded for the language `#1'. Using the pattern for}%
\typeout{** the default language instead.}%
\else
\language=\csname l@#1\endcsname
\fi
#2}}
\providecommand{\BIBdecl}{\relax}
\BIBdecl

\bibitem{DBLP:journals/corr/abs-1903-00374}
\BIBentryALTinterwordspacing
L.~Kaiser, M.~Babaeizadeh, P.~Milos, B.~Osinski, R.~H. Campbell, K.~Czechowski, D.~Erhan, C.~Finn, P.~Kozakowski, S.~Levine, R.~Sepassi, G.~Tucker, and H.~Michalewski, ``Model-based reinforcement learning for atari,'' \emph{CoRR}, vol. abs/1903.00374, 2019. [Online]. Available: \url{http://arxiv.org/abs/1903.00374}
\BIBentrySTDinterwordspacing

\bibitem{margolis2022rapidlocomotionreinforcementlearning}
\BIBentryALTinterwordspacing
G.~B. Margolis, G.~Yang, K.~Paigwar, T.~Chen, and P.~Agrawal, ``Rapid locomotion via reinforcement learning,'' 2022. [Online]. Available: \url{https://arxiv.org/abs/2205.02824}
\BIBentrySTDinterwordspacing

\bibitem{DBLP:journals/corr/GuHLL16}
\BIBentryALTinterwordspacing
S.~Gu, E.~Holly, T.~P. Lillicrap, and S.~Levine, ``Deep reinforcement learning for robotic manipulation,'' \emph{CoRR}, vol. abs/1610.00633, 2016. [Online]. Available: \url{http://arxiv.org/abs/1610.00633}
\BIBentrySTDinterwordspacing

\bibitem{Ohnishi_2019}
\BIBentryALTinterwordspacing
M.~Ohnishi, L.~Wang, G.~Notomista, and M.~Egerstedt, ``Barrier-certified adaptive reinforcement learning with applications to brushbot navigation,'' \emph{IEEE Transactions on Robotics}, vol.~35, no.~5, p. 1186–1205, Oct. 2019. [Online]. Available: \url{http://dx.doi.org/10.1109/TRO.2019.2920206}
\BIBentrySTDinterwordspacing

\bibitem{emam2022safereinforcementlearningusing}
\BIBentryALTinterwordspacing
Y.~Emam, G.~Notomista, P.~Glotfelter, Z.~Kira, and M.~Egerstedt, ``Safe reinforcement learning using robust control barrier functions,'' 2022. [Online]. Available: \url{https://arxiv.org/abs/2110.05415}
\BIBentrySTDinterwordspacing

\bibitem{DBLP:journals/corr/abs-1903-08792}
\BIBentryALTinterwordspacing
R.~Cheng, G.~Orosz, R.~M. Murray, and J.~W. Burdick, ``End-to-end safe reinforcement learning through barrier functions for safety-critical continuous control tasks,'' \emph{CoRR}, vol. abs/1903.08792, 2019. [Online]. Available: \url{http://arxiv.org/abs/1903.08792}
\BIBentrySTDinterwordspacing

\bibitem{wabersich2021probabilisticmodelpredictivesafety}
\BIBentryALTinterwordspacing
K.~P. Wabersich, L.~Hewing, A.~Carron, and M.~N. Zeilinger, ``Probabilistic model predictive safety certification for learning-based control,'' 2021. [Online]. Available: \url{https://arxiv.org/abs/1906.10417}
\BIBentrySTDinterwordspacing

\bibitem{fisac}
\BIBentryALTinterwordspacing
J.~F. Fisac, A.~K. Akametalu, M.~N. Zeilinger, S.~Kaynama, J.~H. Gillula, and C.~J. Tomlin, ``A general safety framework for learning-based control in uncertain robotic systems,'' \emph{CoRR}, vol. abs/1705.01292, 2017. [Online]. Available: \url{http://arxiv.org/abs/1705.01292}
\BIBentrySTDinterwordspacing

\bibitem{he2024agilesafelearningcollisionfree}
\BIBentryALTinterwordspacing
T.~He, C.~Zhang, W.~Xiao, G.~He, C.~Liu, and G.~Shi, ``Agile but safe: Learning collision-free high-speed legged locomotion,'' 2024. [Online]. Available: \url{https://arxiv.org/abs/2401.17583}
\BIBentrySTDinterwordspacing

\bibitem{achiam2017constrainedpolicyoptimization}
\BIBentryALTinterwordspacing
J.~Achiam, D.~Held, A.~Tamar, and P.~Abbeel, ``Constrained policy optimization,'' 2017. [Online]. Available: \url{https://arxiv.org/abs/1705.10528}
\BIBentrySTDinterwordspacing

\bibitem{tanyour}
D.~C. Tan, F.~Acero, R.~McCarthy, D.~Kanoulas, and Z.~Li, ``Value functions are control barrier functions: Verification of safe policies using control theory.''

\bibitem{castellano2024learningsafetycriticsnoncontractive}
\BIBentryALTinterwordspacing
A.~Castellano, H.~Min, J.~A. Bazerque, and E.~Mallada, ``Learning safety critics via a non-contractive binary bellman operator,'' 2024. [Online]. Available: \url{https://arxiv.org/abs/2401.12849}
\BIBentrySTDinterwordspacing

\bibitem{fisac2019bridging}
J.~F. Fisac, N.~F. Lugovoy, V.~Rubies-Royo, S.~Ghosh, and C.~J. Tomlin, ``Bridging hamilton-jacobi safety analysis and reinforcement learning,'' in \emph{2019 Int. Conf. on Robot. and Automat. (ICRA)}.\hskip 1em plus 0.5em minus 0.4em\relax IEEE, 2019, pp. 8550--8556.

\bibitem{altman-constrainedMDP}
E.~Altman, \emph{Constrained Markov Decision Processes}.\hskip 1em plus 0.5em minus 0.4em\relax Chapman and Hall, 1999.

\bibitem{adaptivedp}
\BIBentryALTinterwordspacing
J.~Duan, Z.~Liu, S.~E. Li, Q.~Sun, Z.~Jia, and B.~Cheng, ``Adaptive dynamic programming for nonaffine nonlinear optimal control problem with state constraints,'' \emph{Neurocomputing}, vol. 484, p. 128–141, May 2022. [Online]. Available: \url{http://dx.doi.org/10.1016/j.neucom.2021.04.134}
\BIBentrySTDinterwordspacing

\bibitem{tessler2018rcpo}
\BIBentryALTinterwordspacing
C.~Tessler, D.~J. Mankowitz, and S.~Mannor, ``Reward constrained policy optimization,'' 2018. [Online]. Available: \url{https://arxiv.org/abs/1805.11074}
\BIBentrySTDinterwordspacing

\bibitem{Achiam2019BenchmarkingSE}
\BIBentryALTinterwordspacing
J.~Achiam and D.~Amodei, ``Benchmarking safe exploration in deep reinforcement learning,'' 2019. [Online]. Available: \url{https://api.semanticscholar.org/CorpusID:208283920}
\BIBentrySTDinterwordspacing

\bibitem{projectionbasedcpo}
\BIBentryALTinterwordspacing
T.~Yang, J.~Rosca, K.~Narasimhan, and P.~J. Ramadge, ``Projection-based constrained policy optimization,'' \emph{CoRR}, vol. abs/2010.03152, 2020. [Online]. Available: \url{https://arxiv.org/abs/2010.03152}
\BIBentrySTDinterwordspacing

\bibitem{itegrateddecisionandcontrol}
\BIBentryALTinterwordspacing
Y.~Guan, Y.~Ren, Q.~Sun, S.~E. Li, H.~Ma, J.~Duan, Y.~Dai, and B.~Cheng, ``Integrated decision and control: Towards interpretable and computationally efficient driving intelligence,'' 2021. [Online]. Available: \url{https://arxiv.org/abs/2103.10290}
\BIBentrySTDinterwordspacing

\bibitem{safetycritic_learningtobesafe}
\BIBentryALTinterwordspacing
K.~Srinivasan, B.~Eysenbach, S.~Ha, J.~Tan, and C.~Finn, ``Learning to be safe: Deep rl with a safety critic,'' 2020. [Online]. Available: \url{https://arxiv.org/abs/2010.14603}
\BIBentrySTDinterwordspacing

\bibitem{thananjeyan2021recovery}
B.~Thananjeyan, A.~Balakrishna, S.~Nair, M.~Luo, K.~Srinivasan, M.~Hwang, J.~E. Gonzalez, J.~Ibarz, C.~Finn, and K.~Goldberg, ``Recovery rl: Safe reinforcement learning with learned recovery zones,'' \emph{IEEE Robot. and Automat. Letters}, vol.~6, no.~3, pp. 4915--4922, 2021.

\bibitem{honari2024safetyoptimizedreinforcementlearning}
\BIBentryALTinterwordspacing
H.~Honari, M.~G. Tamizi, and H.~Najjaran, ``Safety optimized reinforcement learning via multi-objective policy optimization,'' 2024. [Online]. Available: \url{https://arxiv.org/abs/2402.15197}
\BIBentrySTDinterwordspacing

\bibitem{thomas2022safereinforcementlearningimagining}
\BIBentryALTinterwordspacing
G.~Thomas, Y.~Luo, and T.~Ma, ``Safe reinforcement learning by imagining the near future,'' 2022. [Online]. Available: \url{https://arxiv.org/abs/2202.07789}
\BIBentrySTDinterwordspacing

\bibitem{binarysafetycritic}
\BIBentryALTinterwordspacing
A.~Castellano, H.~Min, J.~A. Bazerque, and E.~Mallada, ``Learning safety critics via a non-contractive binary bellman operator,'' 2024. [Online]. Available: \url{https://arxiv.org/abs/2401.12849}
\BIBentrySTDinterwordspacing

\bibitem{modelbased_cpoworldmodels}
\BIBentryALTinterwordspacing
Y.~As, I.~Usmanova, S.~Curi, and A.~Krause, ``Constrained policy optimization via bayesian world models,'' 2022. [Online]. Available: \url{https://arxiv.org/abs/2201.09802}
\BIBentrySTDinterwordspacing

\bibitem{modelbased_cmbpo}
\BIBentryALTinterwordspacing
M.~A. Zanger, K.~Daaboul, and J.~M. Zöllner, ``Safe continuous control with constrained model-based policy optimization,'' 2021. [Online]. Available: \url{https://arxiv.org/abs/2104.06922}
\BIBentrySTDinterwordspacing

\bibitem{Sutton1998}
\BIBentryALTinterwordspacing
R.~S. Sutton and A.~G. Barto, \emph{Reinforcement Learning: An Introduction}, 2nd~ed.\hskip 1em plus 0.5em minus 0.4em\relax The MIT Press, 2018. [Online]. Available: \url{http://incompleteideas.net/book/the-book-2nd.html}
\BIBentrySTDinterwordspacing

\bibitem{haarnoja2018softactorcriticoffpolicymaximum}
\BIBentryALTinterwordspacing
T.~Haarnoja, A.~Zhou, P.~Abbeel, and S.~Levine, ``Soft actor-critic: Off-policy maximum entropy deep reinforcement learning with a stochastic actor,'' 2018. [Online]. Available: \url{https://arxiv.org/abs/1801.01290}
\BIBentrySTDinterwordspacing

\bibitem{MnihKSGAWR13}
\BIBentryALTinterwordspacing
V.~Mnih, K.~Kavukcuoglu, D.~Silver, A.~Graves, I.~Antonoglou, D.~Wierstra, and M.~A. Riedmiller, ``Playing atari with deep reinforcement learning,'' \emph{CoRR}, vol. abs/1312.5602, 2013. [Online]. Available: \url{http://arxiv.org/abs/1312.5602}
\BIBentrySTDinterwordspacing

\bibitem{ppo}
\BIBentryALTinterwordspacing
J.~Schulman, F.~Wolski, P.~Dhariwal, A.~Radford, and O.~Klimov, ``Proximal policy optimization algorithms,'' \emph{CoRR}, vol. abs/1707.06347, 2017. [Online]. Available: \url{http://arxiv.org/abs/1707.06347}
\BIBentrySTDinterwordspacing

\bibitem{lillicrap2019continuouscontroldeepreinforcement}
\BIBentryALTinterwordspacing
T.~P. Lillicrap, J.~J. Hunt, A.~Pritzel, N.~Heess, T.~Erez, Y.~Tassa, D.~Silver, and D.~Wierstra, ``Continuous control with deep reinforcement learning,'' 2019. [Online]. Available: \url{https://arxiv.org/abs/1509.02971}
\BIBentrySTDinterwordspacing

\bibitem{fujimoto2018addressingfunctionapproximationerror}
\BIBentryALTinterwordspacing
S.~Fujimoto, H.~van Hoof, and D.~Meger, ``Addressing function approximation error in actor-critic methods,'' 2018. [Online]. Available: \url{https://arxiv.org/abs/1802.09477}
\BIBentrySTDinterwordspacing

\bibitem{softlimb}
R.~Desatnik, M.~Khrenov, Z.~Manchester, P.~LeDuc, and C.~Majidi, ``Optimal control for a shape memory alloy actuated soft digit using iterative learning control,'' in \emph{2024 IEEE 7th International Conference on Soft Robotics (RoboSoft)}, 2024, pp. 48--54.

\end{thebibliography}

\end{document}